\documentclass{article}
\usepackage[final]{colm2026_conference}

\usepackage[utf8]{inputenc}
\usepackage{hyperref}
\usepackage{graphicx} % 
\usepackage{makecell}
\usepackage{url}
\usepackage{bbm}
\usepackage{amssymb}
\usepackage{amsmath}
\usepackage{amsthm}
\usepackage{multirow,multicol}
\usepackage{amsfonts}
\usepackage{subcaption}
\usepackage{xspace}
\usepackage{booktabs}
\usepackage{sidecap}
\sidecaptionvpos{figure}{t}

\newtheorem{definition}{Definition}
\mathchardef\mhyphen="2D

\newcommand{\eg}{\emph{e.g.,}}
\newcommand{\name}{PA~}

\newcommand{\vshat}{\ensuremath{\hat{v}_s}\xspace}
\newtheorem{theorem}{Theorem}

\usepackage{microtype}
\usepackage{hyperref}
\usepackage{url}
\usepackage{booktabs}

\usepackage{lineno}

\definecolor{darkblue}{rgb}{0, 0, 0.5}
\hypersetup{colorlinks=true, citecolor=darkblue, linkcolor=darkblue, urlcolor=darkblue}

\begin{document}

% \twocolumn[
\title{A Prior-Aware Metric for Efficiently Distinguishing Memorization from Generalization in Large Language Models}
% \author{Trishita Tiwari, Ari Trachtenberg, G. Edward Suh}
% \begin{icmlauthorlist}
\author{Trishita Tiwari 
\\ Cornell University
\And Ari Trachtenberg 
\\ Boston University
\And G. Edward Suh \\
NVIDIA, Cornell University
}
\ifcolmsubmission
\linenumbers
\fi

\maketitle
% \begin{abstract}
% This work proposes a computationally inexpensive method to measure memorization of training data in LLMs (Large Language Models) while accounting for generalization. Prior approaches such as counterfactual memorization~\cite{zhang2023counterfactual}, have been computationally expensive, and therefore only been studied in limited settings.  However, our new metric, Prior-Aware memorization, does not require training any new models, and can thus be directly applied to existing LLMs trained on large amounts of data. We evaluate our metric on two pre-trained models, Llama and OPT, trained on the Common Crawl and The Pile, respectively. We discover that for the largest models, $55-90\%$ of the sequences that would  be classified as ``memorized'' in earlier models are, in fact, generalizable sequences.
% \end{abstract}

\begin{abstract}
    Training data leakage from Large Language Models (LLMs) raises serious concerns related to privacy, security, and copyright compliance. A central challenge in assessing this risk is distinguishing prefix-specific memorization of training data from the generation of statistically common sequences. Existing approaches to measuring memorization often conflate these phenomena, labeling outputs as memorized even when they arise from generalization over common patterns. Counterfactual memorization and other related metrics \citep{zhang2023counterfactual, wang2025generalization, lesci2024causal} provide principled solutions, however, their reliance on retraining multiple baseline models or parsing through the training data makes them computationally impractical at scale.
    This work introduces \emph{Prior-Aware memorization}, a theoretically grounded, lightweight and training-free criterion for identifying prefix-specific memorization in LLMs. The key idea is to evaluate whether a candidate suffix is strongly associated with its specific training prefix or whether it appears with high probability across many IID sampled sequences from the training data distribution due to statistical commonality. We correlate our metric with counterfactual memorization, and also evaluate it on the training corpora of two pre-trained models, LLaMA and OPT. Our results show that between 55\% and 90\% of sequences previously labeled as memorized fail our criterion and are consistent with statistical commonality
    % Similar findings hold for the SATML training data extraction challenge dataset, where roughly 40\% of sequences exhibit common-pattern behavior despite appearing only once in the training data. These results demonstrate that low frequency alone is insufficient evidence of memorization and highlight the importance of accounting for model priors when assessing leakage.
\end{abstract}

% either by comparing models trained with and without a target sequence, or by computing pre-training data frequencies. 

\section{Introduction}
 The growing adoption of Large Language Models (LLMs) has amplified concerns over training-data leakage. Two prominent issues include (a) copyright and licensing violations, which have been the subject of several lawsuits and prior literature~\citep{karamolegkou2023copyright, tremblay_v_openai, Kadrey_v_Meta,nyt_v_msft, chang2023speak}, and (b) exposure of sensitive data, such as Personally Identifiable Information (PII)~\citep{carlini2021extracting, mozes2023use}. As a result, several studies have attempted to quantify LLM training data leakage, often using various metrics to
 benchmark the extraction of training data in different training and inference scenarios~\citep{carlini2021extracting, carlini2022quantifying, karamolegkou2023copyright,yu2023bag, biderman2024emergent}.

Prior approaches to quantifying memorization in LLMs often overlook the models’ capacity to generalize, conflating prefix-specific memorization with the generation of statistically common sequences~\citep{carlini2022quantifying,yu2023bag}. For example, a prompt like “The murder was committed by” may yield “John Doe” with high probability not because the model memorized this sequence from training. Even if the full sequence appears in the training corpus, the high-probability of the output may simply reflect the statistical prevalence of 'John Doe' as a common placeholder name. Recent work confirms that LLMs can reproduce training text verbatim by generalizing from related patterns rather than recalling specific examples~\citep{liu2025language}. Thus, statistically likely sequences should not necessarily be classified as \emph{memorized}.

\emph{Counterfactual Memorization}~\citep{zhang2023counterfactual} seeks to more robustly quantify such generalization by comparing the likelihood of leaking a training sequence from models trained with and without that sequence.  In this approach, a sequence has a high probability of leakage in both models is considered unlikely to be counterfactually memorized. Unfortunately, explicitly computing this difference may be very expensive and impractical for production language models, as it requires training several ``counterfactual'' or ``baseline'' models for every training sequence, all within tightly controlled settings. For this reason, prior studies on Counterfactual memorization have been limited in scale~\citep{zhang2023counterfactual,ghosh2025rethinking}. While there have been other metrics proposed to solve this issue~\citep{lesci2024causal,wang2025generalization}, they remain fundamentally constrained by the same computational intractability at the scale of modern production models as Counterfactual memorization. This motivates the need for a cheaper method for filtering out statistically common sequences.

% No one has looked at generalization at scale because its so expensive

Toward that goal, we introduce \emph{Prior-Aware memorization} (\name memorization), a theoretically grounded and efficient, inference-only alternative to Counterfactual memorization (and related metrics). Our approach
filters out statistically likely outputs (suffixes) depending on whether they are strongly associated \emph{only} with their specific training prompts (prefixes). It distinguishes prefix-specific memorization from statistical likelihood by testing whether a suffix retains high generation probability when the model is prompted with prefixes sampled IID from the training data (according to the same distribution). If a suffix appears with high confidence across many independent prefixes, its high-likelihood reflects generalization from statistical commonality rather than from memorization of a specific prefix--suffix pair.

We validate our approach empirically against Counterfactual memorization~\citep{zhang2023counterfactual}, and evaluate our methodology on text from the training corpus of two pre-trained models: Llama, and OPT. 
% Our target sequences were either (a) randomly sampled long sequences (to simulate copyright-infringement scenarios), or (b) Named Entities (\eg names of individuals, places, etc.) to simulate the risk of leaking Personally Identifying Information (PII). 
We find that using the largest OPT and LLama models, $55-90\%$ of the sequences that would have been labeled as memorized are in fact statistically common sequences. More surprisingly, we note that a similar results occur even with the SATML training data extraction challenge dataset~\citep{yu2023bag}, where around $40\%$ of sequences are ``common'' in nature; this is despite the fact that each sequence in the challenge dataset occurs only once in the entire training data (1-eidetic). \emph{Our findings highlight the significance of looking beyond verbatim production of training data when labeling a sequence as memorized, urging us to rethink some of our previous notions about memorization in LLMs}. 

%AT - wouldn't low-frequency be labeled not memorized?
%TT - sometimes models memorize something even if its low-frequency if the sequence is out-of-distribution

Note that we audit known, naturally occurring prefix-suffix pairs using likelihood
access and target-independent prefixes sampled from the training
distribution; synthetic injections are used only for validation. We do
not study adaptive prompt search or discovery of unknown records, meaning that ``prior-aware'' is a post-hoc auditing metric, not an adversarial-extraction bound.

% \subsection{Roadmap}
The remainder of this work is organized as follows: in Section~\ref{sec:motivation}, we outline the  research gaps motivating our work. Section~\ref{sec:metric} includes the definitions of the metrics we use for our experimental results. We present our findings in Section~\ref{sec:eval} and highlight our novel conclusions. We conclude this work in Section~\ref{sec:conclusion}.

\section{Generalization vs Memorization: Issues with Leakage Measures}
\label{sec:motivation}

%AT - please check the logic below, as it is critical to the work.
Prior work generally considers training data that can be produced verbatim with high probability to be ``memorized''~\citep{carlini2021extracting, carlini2022quantifying, biderman2024emergent, yu2023bag, shi2024detecting}. However, as noted in our Introduction, such methods may misclassify statistically common sequences as memorized. To make this limitation precise, consider a prefix $p$ and suffix $s$, with $p \Vert s \in D$ indicating that their concatenation occurs in the dataset $D$. One popular instantiation of existing memorization metrics typically deem the sequence $p \Vert s$ to be \emph{memorized} if the conditional probability $P(s \mid p)$ is large~\citep{carlini2022quantifying,yu2023bag}. Following prior work, we refer to this definition as \emph{extractable memorization}, because the memorized content can be elicited from the model by supplying prefix $p$. However, we argue that memorization should not be defined this way. This is because there are multiple reasons that $P (s\mid p)$ can be high, as can bee seen through Bayes' rule:
\begin{equation}
\label{eq:bayes}
P(s \mid p) = \frac{P(p \mid s) \cdot P(s)}{P(p)}  
\end{equation}

Indeed, $P(s \mid p)$ can be high if:
\begin{enumerate}
    \item If $P(s)$ is very large, suggesting that $s$ may be statistically popular (generic). For instance, in the sequence if $p$~=~``The murder was committed by'' and $s$~=~``John Doe'', $P(s)$ may be large  because $s$  is a popular occurrence in the dataset. 
        
    \item If the relative belief ratio $\frac{P(p \mid s)}{P(p)}$ is large. This would happen if $P(p)$ is small (\eg the prefix is a rare prefix in the dataset) relative to $P(p \mid s)$, suggesting that the suffix is a strong indicator of the prefix, as we would expect for memorization.
\end{enumerate}

We argue that if $P(s|p)$ is high due to the former reason, then sequence $p\Vert s$ is \emph{not memorized} by the model. Indeed, variations of this observation have been made by other works~\citep{zhang2023counterfactual, lesci2024causal, prashanth2024recite}, and have  led to alternative definitions of memorization that we discuss below.

% The converse of the definition---i.e., a memorized sequence will have a high $P(s|p)$ \emph{is} reasonable under maximum-likelihood training, which seeks to maximize $\prod_{(p,s) \in D} P(s \mid p)$, since a training example being memorized would imply a high $P(s \mid p)$
% Similarly, concurrent work ~\citep{morris2025much} also defines a compression based memorization metric using Kolmogorov complexity. 
%However, their method relies on an ``all-knowing'' reference model that embodies the true data distribution, and so it is unclear if this is feasible for practical purposes.

A more rigorous metric that captures this argument is \emph{Counterfactual Memorization} \citep{zhang2023counterfactual}, which quantifies the performance differential of a model trained with versus without a specific target sequence. Under this definition, a sequence is considered memorized only if it exhibits low performance on models from which it was excluded; conversely, high performance across both model variants indicates generalization, as the output is not contingent on explicit training exposure. While robust, this approach is often computationally prohibitive. One may consider cheaper estimates to this calculation, such as training approximate baselines with a different data distribution; however, it is fundamentally difficult to make these estimates accurate~\citep{zhang2024membership}.

Alternative metrics, such as \emph{Distributional Memorization} \citep{wang2025generalization}, attempt to account for generalization by correlating output probabilities with pretraining frequencies via a ``task-gram''. However, constructing such a task-gram at the scale of modern production corpora remains largely infeasible. Similarly, \cite{lesci2024causal} estimate causal memorization profiles from training checkpoints. Contextual Memorization \citep{ghosh2025rethinking} is instead grounded in local overfitting: it compares training recollection with the best leave-one-out recollection across epochs and is stricter
than counterfactual memorization. Exact computation requires per-target leave-one-out retraining and access to the training trajectory. PA uses a Bayes-motivated within-model likelihood ratio, trading that counterfactual, training-dynamics interpretation for inference-only scalability.

% Similarly, \cite{lesci2024causal} and \cite{ghosh2025rethinking} propose a causal alternative by comparing performance across model checkpoints flanking the sequence’s introduction during training. Yet, the efficacy of this method is limited by the temporal resolution of standard training regimens; since models are typically checkpointed only every few hundred or thousand steps, isolating the causal influence of a single training instance remains an open challenge.

There have also been other definitions of memorization where it is unclear if they can practically distinguish sequences $p \Vert s$ with statistically popular targets $s$ from their truly memorized counterparts. One such definition is by \cite{schwarzschild2024rethinking}, which classifies a sequence as memorized if it can be generated with a compressed prompt. Similarly, concurrent work by \cite{morris2025much} defines a compression-based memorization metric using Kolmogorov complexity (which is not computable in general).   

Failing to account for generalization capabilities may greatly overestimate memorization, as indicated by a recent study~\citep{liu2025language}. This emphasizes the clear need for an inexpensive, but accurate, method for identifying whether a particular training sequence is truly memorized, or just statistically common.

\section{Our Metric: Prior-Aware (PA)  Memorization}
\label{sec:metric}

We next propose and describe our approach to quantifying \emph{Prior-Aware} [ \name] memorization, starting with a definition and proceeding to discuss how it can be feasibly computed. 
%
% We empirically test our metric's ability to filter out generic sequences by measuring its correlation with Counterfactual Memorization from the literature~\citep{zhang2023counterfactual}.
%
Throughout this section, we presuppose a language model $M$, defined as follows.
\begin{definition} The language model $M$ maps an input token sequence 
$p = (t_1, t_2, \ldots, t_j)$ to a distribution for the next token $t_{j+1}$:
\[
M : p \mapsto P(\cdot \mid p).
\] 
\end{definition}

We assume that the model $M$ is trained on a dataset $D$, consisting of sequences of prefix and suffix pairs $p \Vert s$ (where $\Vert$ is the concatenation operator). We will use the notation $P(x|y;M)$ to denote the probability of generating sequence $x$ from such an application of $M$ to an initial sequence $y$, and $P(x;M)$ will denote the total probability of seeing $x$ from any starting sequence.  We also assume that text generation proceeds auto-regressively from a starting sequence $y$, and each time generating (and incorporating into $y$) a subsequent token. 
%AT - is this what you intended?
%TT - thank you! This looks great! Just made a minor modification from (p,s) to p||s to keep consistent with Sec 2.
Where the choice of model is clear, we may omit it for brevity, and write simply $P(x|y)$ or $P(x)$.

\begin{definition}[Prior-Aware memorization]
\label{def:ps-mem}
    For tuning thresholds $0 \leq m \leq 1$ and $n \geq 0$ we say that a sequence $p\Vert s \in D$ is \emph{Prior-Aware memorized} by a model $M$ if $P(s \mid p; M) > m$ and $\frac{P(s \mid p; M)}{P(s; M)}>n$.
\end{definition}

\textbf{Extractability and Relative Belief.}  
The definition imposes two requirements: First, that $P(s \mid p; M) > m$ (see Section~\ref{sec:prob-leak}), which ensures that the suffix $s$ has a high probability of being generated verbatim by model $M$ when prompted with $p$. This has been adopted from prior work~\citep{carlini2022quantifying}. %AT - I'm not sure what is meant by the "consistence" here
%AT  -I fixed the typoe beleif -> belief
%
Second, the definition insists that the relative belief ratio, $\frac{P(s \mid p; M)}{P(s; M)} > n$ (see Section~\ref{sec:est-gen}), implying that $s$ is a strong indicator of $p$. A large ratio ($n \gg 1$) suggests that the presence of $p$  predicts the presence of $s$, or, in other words, $s$ much more likely following $p$ than in general.

\textbf{Rationale for the Dual Criteria.}
Although the relative belief ratio incorporates the conditional probability $P(s \mid p; M)$, we explicitly maintain both terms for practical reasons. First, evaluating the conditional probability is significantly more tractable than estimating the marginal probability $P(s; M)$ (see Sections~\ref{sec:prob-leak} and ~\ref{sec:est-gen}). Thus, in practice, the first condition serves as an efficient filter: if a sequence fails the $m$-threshold for extractability, it is discarded before performing the more expensive marginal density estimation required for the second term. Conceptually, if $p \Vert s$ lacks a sufficient probability of verbatim generation, its status as a statistically common  sequence is moot, as it does not pose a measurable risk of extractablility.

% Note that we require two separate terms $P(s \mid p; M) > m$, \emph{and} $\frac{P(s \mid p; M)}{P(s; M)} > n$. One may note that the latter term already contains the former, however, computing the former term is much more computationally feasible (See Section~\ref{sec:prob-leak}). Thus, in practice, this can be used as an effective filter that obviates the need for computing the latter term if the former term is below the threshold. Intuitively, we argue that if a sequence $p \Vert s$ does not have a high enough probability of being generated verbatim, we do not care if it's suffix is statistically common or not since the sequence is not memorized anyway.

%A large ratio arises when $P(s; M)$---the prior probability reflecting how ``common'' or ``popular'' $s$ is---remains small while $P(s \mid p; M)$ is large. In this case, the model generates $s$ with high probability only when conditioned on its training prefix $p$ \emph{without being statistically likely}, indicating memorization rather than generalization.

To compute our Prior-Aware metric for any given pair $p \Vert s \in D$, we need to compute two terms: $P(s\mid p; M)$ and $P(s; M)$.  We explain how to do this in the next two subsections.
%To simplify notation, we omit the dependence on the model $M$ in probability expressions; for example, $P(s)$ denotes $P(s; M)$ unless otherwise specified.

% We first describe the metric for probabilistic leakage and then our methodology for estimating counterfactual memorization. 

% For the sake of brevity, we also define abbreviations for each of these metrics to use in subsequent sections.

\subsection{\texorpdfstring{$P(s\mid p; M)$}{P(s|p; M)}: verbatim generation of \texorpdfstring{$s$}{s} given \texorpdfstring{$p$}{p}}
\label{sec:prob-leak}

% \paragraph{Preliminaries}
% Like in Section~\ref{sec:background}, we use a model $m_{\mathcal{V},\theta}$. $m_{\mathcal{V},\theta}$ has a vocabulary $\mathcal{V}$ and is parameterized by $\theta$. Its output logit vector is normalized into a probability distribution by a normalization function $n_\psi$ (e.g., softmax). After this, a token is sampled via a decoding function $d_\phi$ (e.g., top-$k$, top-$p$, etc). For brevity, we define the following composition of the above functions $M_\alpha = d_\phi \circ n_\psi \circ m_{\mathcal{V},\theta}$ where $\alpha = \{ \mathcal{V}, \theta, \psi, \phi \}$. 
% $M_{\alpha}$ thus represents a model $m_{\mathcal{V},\theta}$ with a fixed normalization function $n_\psi$ and decoding scheme $d_\phi$.

% \begin{definition}
% Token Probability: The probability of producing a token $t_{j+1}$ when the model is prompted with a sequence of $j$ tokens $t_{1:j}$ is defined as by \emph{Token Probability (TP)}:
% \begin{align}
% \begin{split}
%     TP(t_{j+1},t_{1:j},M_\alpha) &= P(t_{j+1}|t_{1:j};M_\alpha) \\ &= M_\alpha(t_{1:j})[t_{j+1}]
%     \label{eq:tlp}
% \end{split}
% \end{align}    
% \end{definition} 

For notational convenience, we write $t_{1:j}$ to denote the sequence of tokens 
$(t_1, t_2, \ldots, t_j)$. Using this notation, the probability of generating the
$k$-token suffix $s = t_{j+1:j+k}$ when prompted
with the $j$-token prefix $p = t_{1:j}$ derives from the chain rule:
\begin{align}
P(s \mid p) = \prod_{i=j+1}^{j+k} P(t_i \mid t_{1:i-1}).
\label{eq:slp}
\end{align}

% Moving forward, we omit the subscript $M_\theta$ from $P(s \mid p)$ for the sake of brevity and use $P(s|p)$ to refer to the probability from a model with fixed parameters. 

Equation~\ref{eq:slp} thus provides a closed form for the probability of leaking our target sequence verbatim from a given prefix. In practice, this term can be efficiently computed in a single forward-pass per token~\citep{tiwari2024sequence,hayes2024measuring}, and we can thus
%, since models trained with the language modeling objective already output a probability distribution over the next token for a given prompt. 
 use this to compute whether $P(s \mid p) > m$ for a given $p \Vert s$. 

% Next, we estimate which of these high-probability sequences are generic, as described below. 

\subsection{\texorpdfstring{$P(s; M)$}{P(s; M)}: generality of $s$}
\label{sec:est-gen}

The second component of our definition is $\frac{P(s \mid p)}{P(s)}$.  To compute the denominator, we would like to calculate the total probability over all prefixes $V^*$ in the model's vocabulary $V$.   To do this, let \(\pi_{\mathcal V^*}^{D}\) denote the empirical distribution over
valid prefix occurrences induced by the sampling procedure on \(D\).  Then,
\begin{equation}
P(s;M)\;\hat{=}\; v_s
    =\mathbb{E}_{p\sim\pi_{\mathcal V^*}^{D}}
      [P(s\mid p;M)]
    =\sum_{p\in\mathcal V^*}
      \pi_{\mathcal V^*}^{D}(p)P(s\mid p;M) \\
%    =
%    P(s)~\hat{=}~\vs = \sum_{p_i \sim V^*} P(s \mid p_i) \cdot P(p_i)
\end{equation}

However, this calculation is computationally prohibitive.  Instead, we use an unbiased estimator \vshat (inspired by Monte-Carlo Integration~\citep{binder1992monte}) that converges to the true probability $P(s)$ as the number samples $c$ grows to infinity.

%AT - Monte Carlo - I don't think this estimator is "new", and this should probably be stated.
% TT - Monte Carlo integral?

\begin{definition}
\label{def:estimator}
Given samples $q_1, q_2, \ldots q_{c}\overset{\mathrm{i.i.d.}}{\sim}
\pi_{\mathcal V^*}^{D}$, our \emph{total probability estimator} is given by
\begin{equation}
    \vshat = \frac{1}{c}\sum_{i = 1}^{c} P(s \mid q_i)
\end{equation}
\end{definition}

Note this estimator requires samples $q_i$ to match the distribution of prefixes $p_i \sim \pi_{\mathcal V^*}^{D}$ and be statistically (but not necessarily semantically) independent,
and it is thus computed for a specific model $M$; for brevity, we omit $M$ from the notation where it can be straightforwardly inferred. The next two theorems show that $\hat{v}_s$ is unbiased.

\begin{theorem}
% Over several random trials, %AT - I'm not sure this preamble is needed.
$\mathbb{E}[\hat{v_s}] = v_s$.
\label{the:expect}
\end{theorem}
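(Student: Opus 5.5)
The plan is to use linearity of expectation together with the fact that each sample has the same distribution as the prefix distribution defining $v_s$. Fix the model $M$ and the suffix $s$, so that $f(q) := P(s \mid q; M)$ is a deterministic function of the prefix $q$. By Definition~\ref{def:estimator}, $\hat{v}_s = \frac{1}{c}\sum_{i=1}^{c} f(q_i)$. The only randomness comes from the draws $q_1,\ldots,q_c \sim \pi_{\mathcal V^*}^{D}$, so the expectation in Theorem~\ref{the:expect} is over these draws.

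The first step is to check that the expectation is well defined. Each $f(q_i)$ is a probability, so $0 \le f(q_i) \le 1$. It is therefore a bounded random variable with finite expectation, even though $\mathcal V^*$ is countably infinite. This also justifies writing the expectation as the series $\sum_{p\in\mathcal V^*}\pi_{\mathcal V^*}^{D}(p)f(p)$, which converges absolutely because $\pi_{\mathcal V^*}^{D}$ sums to one.

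The second step applies linearity of expectation, which needs no independence:
\[
\mathbb{E}[\hat{v}_s] = \frac{1}{c}\sum_{i=1}^{c}\mathbb{E}[f(q_i)].
\]

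The third step uses that each $q_i$ has marginal distribution $\pi_{\mathcal V^*}^{D}$. Hence
\[
\mathbb{E}[f(q_i)] = \sum_{p\in\mathcal V^*}\pi_{\mathcal V^*}^{D}(p)\,P(s\mid p;M) = v_s
\]
for every $i$. Summing $c$ copies and dividing by $c$ gives $\mathbb{E}[\hat{v}_s]=v_s$.

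There is no real obstacle here. The only point needing care is the first step: the sum over the infinite set $\mathcal V^*$ must be legitimate, and boundedness of $f$ handles this. It is also worth remarking that the i.i.d.\ assumption is stronger than needed. Identical marginals suffice for unbiasedness, and independence will matter only for the subsequent convergence theorem, via the law of large numbers.
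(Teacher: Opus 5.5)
Your proof is correct and follows the paper's argument: linearity of expectation, then the fact that each $q_i$ is distributed as $\pi_{\mathcal V^*}^{D}$, which gives $\mathbb{E}[P(s \mid q_i)] = v_s$. Your boundedness check and your remark that identical marginals suffice are accurate additions; the paper uses independence only later, in the variance computation.
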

\begin{proof}
By linearity of expectation, $\mathbb{E}[\hat{v_s}] = \frac{1}{c}\sum_{i=1}^{c}    
\mathbb{E}[P(s \mid q_i)]$.
Since $q_i$ are chosen i.i.d, we have that $\mathbb{E}[P(s~\mid~q_i)] = \mathbb{E}[P(s \mid q_1)]$.  This means that
$\mathbb{E}[\hat{v}_s] = \frac{1}{c} \cdot (c \cdot \mathbb{E}[P(s~\mid~q_1)]) = \mathbb{E}[P(s~\mid~p_1)] = v_s$
% &= E[P(s \mid q_1)] \\
% &= E[P(s \mid p_1)] = v_s
Recall that $q_1$ and $p_1$ are identically distributed to $p_i$.
\end{proof}

Indeed, the variance $\lim_{c \rightarrow \infty}Var[\hat{v_s}] = 0$, as demonstrated in the following theorem (whose proof is relegated to Appendix~\ref{app:proof-var} due to space constraints):
\begin{theorem}
\label{the:var}
$Var[\hat{v_s}] = \frac{1}{c} \cdot Var[P(s \mid p_i)] \leq \frac{1}{4c}$ 
\end{theorem}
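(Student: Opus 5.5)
The plan has two parts: an exact variance computation using independence, and a uniform bound on the variance of a $[0,1]$-valued random variable. Write $X_i = P(s \mid q_i)$. Each $X_i$ is a deterministic function of $q_i$ for the fixed model $M$ and suffix $s$. Because the $q_i$ are drawn i.i.d.\ from $\pi_{\mathcal V^*}^{D}$, the $X_i$ are i.i.d.\ copies of $X = P(s \mid p)$ with $p \sim \pi_{\mathcal V^*}^{D}$, and they share a common variance $Var[P(s \mid p_i)]$. This is the same reduction used in the proof of Theorem~\ref{the:expect}.

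For the equality, pull out the constant factor to get $Var[\hat{v}_s] = \frac{1}{c^2} Var\left[\sum_{i=1}^c X_i\right]$. Pairwise independence makes every covariance term $Cov(X_i, X_j)$ with $i \neq j$ vanish, so the variance of the sum equals the sum of the variances. That sum is $c \cdot Var[X_1]$, which gives $Var[\hat{v}_s] = \frac{1}{c} Var[P(s \mid p_i)]$.

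For the inequality it suffices to show that $Var[X] \leq \frac{1}{4}$ whenever $X \in [0,1]$. Here $X$ is a conditional probability, and by Equation~\ref{eq:slp} it is a product of next-token probabilities, so it lies in $[0,1]$. Let $\mu = \mathbb{E}[X]$. Since $0 \leq X \leq 1$ we have $X^2 \leq X$, hence $\mathbb{E}[X^2] \leq \mu$. Therefore $Var[X] = \mathbb{E}[X^2] - \mu^2 \leq \mu(1-\mu) \leq \frac{1}{4}$, where the last step uses the AM--GM inequality (the maximum of $\mu(1-\mu)$ is $\frac14$, at $\mu = \frac12$). This is Popoviciu's inequality specialized to the interval $[0,1]$. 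Multiplying by $\frac{1}{c}$ gives the stated bound, and letting $c \to \infty$ gives the remark that the variance tends to $0$.

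None of these steps is technically hard. The only points needing care are justifying that the $X_i$ are genuinely independent and identically distributed, which follows from the i.i.d.\ sampling assumption in Definition~\ref{def:estimator}, and noting that $X$ is bounded in $[0,1]$, which the second step relies on. I would state both explicitly rather than leave them implicit.
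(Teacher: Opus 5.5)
Your proposal is correct and follows the paper's proof: the same independence-based reduction $Var[\hat{v}_s] = \frac{1}{c^2}\sum_{i=1}^{c} Var[P(s \mid q_i)] = \frac{1}{c} Var[P(s \mid p_i)]$, followed by bounding the variance of a $[0,1]$-valued quantity by $\frac{1}{4}$. The only difference is that the paper simply cites Popoviciu's inequality, whereas you derive the $[0,1]$ case directly via $X^2 \leq X$ and $\mu(1-\mu) \leq \frac{1}{4}$, and you state explicitly that $P(s \mid p) \in [0,1]$, which the paper leaves implicit.
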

In other words, over a large number of prefixes sampled from the dataset, the error of the estimator tends to $0$.

% TODO:
% Cite usage of near duplicates to indicate simplicity
% Cite 
\section{Evaluation}
\label{sec:eval}

In this section, we start by empirically evaluating the correlation of Prior Aware memorization with Counterfactual memorization~\citep{zhang2023counterfactual}, an existing metric that aims to distinguish memorized $p\Vert s$ from those where $s$ is statistically common. Due to the prohibitive cost of computing Counterfactual memorization, we first conduct two small-scale correlation experiments with real and synthetic data. We discuss the experiments with real data in Section~\ref{sec:counterfactual}, but relegate a discussion of the latter to Appendix~\ref{app:syn}. We then evaluate PA memorization alone at scale with pretrained models and note any trends in Section~\ref{sec:large-scale}.

\subsection{Correlation with Counterfactual Memorization}
\label{sec:counterfactual}

We first empirically show that \name memorization can indeed distinguish $p\Vert s$ where $s$ is statistically popular by measuring its correlation  with Counterfactual memorization~\citep{zhang2023counterfactual}. To do so, we train several models in a controlled setting, and measure both Counterfactual memorization and our Prior-Aware memorization $\frac{P(s \mid p)}{\vshat}$. We then show that these quantities are positively correlated with a Pearson Correlation Coefficient of 0.71. 

\subsubsection{Counterfactual Memorization}
For completeness, we restate the definition of Counterfactual Memorization from \citep{zhang2023counterfactual}.

\begin{definition}[Counterfactual Memorization~\citep{zhang2023counterfactual}] 
\label{def:counterfactual-mem}
Given a training algorithm $A$ that maps a training
dataset $D$ to a trained model $f$, and a measure $L(f, x)$ of the accuracy of $f$ on a specific example
$x$, the Counterfactual Memorization of a training example $x$ in $D$ is given by
\vspace{-0.3in}
\end{definition}
\begin{align}
    \text{mem}(x) \triangleq \underset{\scriptscriptstyle S \subset D, x \in S}{\mathbb{E}} [L (A(S), x)] - \underset{\scriptscriptstyle S\prime \subset D, x \notin S^\prime}{\mathbb{E}} [L (A(S^\prime), x)]
\end{align}
where $S$ and $S^\prime$ 
%AT - Are we sure the correction of S' above is right?
%TT - I'm pretty sure they meant S', at least based on the text that they have explaining the equation
are subsets of training examples sampled from $D$. The expectation is taken with
respect to the random sampling of $S$ and $S^\prime$, as well as the randomness in the training algorithm $A$. Generally speaking, the first term measures expected accuracy over models that contain the target sequence $x$, and the second over models without $x$. A sequence is only considered Counterfactually memorized if this difference is large, as it indicates that the high performance on $x$ by the former models is due to explicit training exposure.

\subsubsection{Metric measurements.} We describe how we measure Definitions~\eqref{def:counterfactual-mem} and~\eqref{def:ps-mem} in our experiments. 

% \begin{enumerate}
% AT - this next paragrah is really not clear ... is M trained on exact and M' on near-dup?
\textbf{Counterfactual Memorization.} In our setup, we interpret $x$ in~\eqref{def:counterfactual-mem} as $p \Vert s$, where $p$ and $s$ are of equal length.  To simplify notation, we use $M$ and $M^\prime$ to refer to $A(S)$ and $A(S^\prime)$ respectively, which we refer to as the target ($M$) and baseline ($M^\prime$) models (described below). To measure the accuracy of the model $M(x=p\Vert s)$, we utilize the measure $\log(P(s\mid p; M))$, following the definition of extractable memorization based on~\citep{carlini2022quantifying}. 
% The expectation is taken over all models trained with the same ratio of near-duplicates to exact copies.
% AT - I don't understand what is informly distributed:
% TT - Ah I mean that the models are all equally likely, but I think this sentence is confusing so I'll remove it!

% As in~\citep{zhang2023counterfactual}, we assume these models are uniformly distributed, so the expectation reduces to the average of $\log(P(s \mid p))$ across multiple trained models. 

Thus, we measure the following:
\begin{align}
     \mathbb{E}_M[\log(P(s\mid p; M))] - \mathbb{E}_{M^\prime}[\log(P(s\mid p; M^\prime))] 
     \label{plot:x-axis}
\end{align}

\textbf{\name Memorization.} While \name memorization (Definition~\ref{def:ps-mem}) is formally defined as a binary indicator, Counterfactual memorization yields a continuous-valued score. To facilitate a direct comparison between these definitions, we employ the continuous relative belief ratio, $\frac{P(s \mid p; M)}{\hat{v}_{s; M}}$, omitting the discrete threshold $n$. Furthermore, although Definition~\ref{def:ps-mem} includes a preliminary conditional probability threshold, we waive this filtering step in our small-scale analysis as its primary function is to optimize large-scale experiments. This simplification allows for a direct correlation analysis between the singular PA score and the corresponding Counterfactual score for every sequence. For numerical stability and alignment with common evaluation practices, we compute this ratio in log-space:
\begin{align}
\mathbb{E}_M[\log\left(\frac{P(s\mid p; M)}{\hat{v}_s}\right)] = \mathbb{E}_M[\log(P(s \mid p; M))] - \mathbb{E}_M[\log(\hat{v}_{s;M})] 
     \label{plot:y-axis}
    \end{align}
Note that \name memorization only relies on $M$, thus obviating the need for training baseline models $M^\prime$ like with Counterfactual memorization.

% \name Memorization provides a binary indicator of whether $p \Vert s$ is memorized or not, as opposed to the continuous-valued score that Counterfactual Memorization provides. Hence, to keep the two definitions comparable, we use the raw ratio $\frac{P(s \mid p; M)}{\hat{v}_{s; M}}$ directly without the threshold $n$. Note that even though Definition~\ref{def:ps-mem} consists of two terms, we omit the former term as its primary purpose is to act as a filter in large scale experiments, which is not necessary in this small scale setting. This omission also allows us to directly measure the correlation as we are now comparing a single score from PA Memorization to the one from Counterfactual Memorization under the same setting for every sequence. Finally, we measure the ratio in log space, like so:
% compute it in log-space,  instead of measuring $E_M[\frac{P(s \mid p; M)}{\hat{v}_{s; M}}]$, we

\subsubsection{Target and Baseline Datasets.}
\paragraph*{Dataset} Here, we train $124M$-parameter GPT-2 models~\citep{radford2019language} on WikiText~\citep{merity2016pointer}. Crucially, we select naturally occurring 
$p\Vert s$ pairs from WikiText in which the suffix $s$ spans a wide range of frequencies. Specifically, we focus on pairs where 
$s$ is a named entity, following prior work on memorization~\citep{lukas2023analyzing}. Figure~\ref{fig:ne-dist} presents a frequency histogram of named entities in WikiText: for example, ``United States of America'' appears approximately 500 times, whereas ``Starlicide'' occurs only once.

We sample a total of $50$ $p \Vert s$ pairs, with $s$ drawn uniformly across the frequency buckets shown in Figure~\ref{fig:ne-dist}.  Each target requires a separately retrained leave-one-out baseline (50 baseline runs per seed here, plus the shared target model), so the cost scales linearly with the audit set. We therefore treat this experiment as a proof of concept motivating PA's inference-only scalability, not as a corpus-level prevalence estimate.

\begin{figure}
    \centering
    \includegraphics[width=0.7\linewidth]{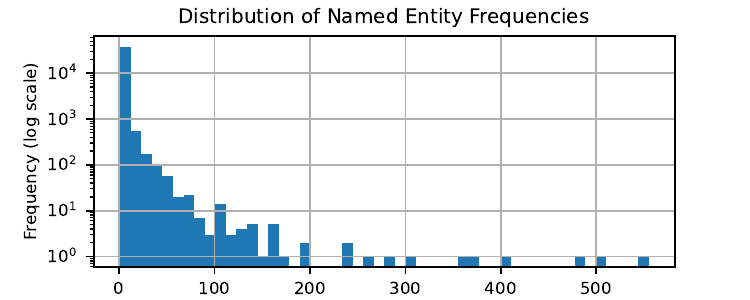}
    %\vspace{-0.1in}
    \caption{Distribution of Named Entities across Wikitext. The x-axis shows the frequency of each Named-Entity, and the y-axis shows how many named entities fit in each bucket.}
    % Fix axes
    \label{fig:ne-dist}
    %\vspace{-0.15in}
\end{figure}

\paragraph*{Target Model} % \subparagraph{}
Here, we train the model as-is on Wikitext without modifying the dataset.

\paragraph*{Baseline Model}
In this setting, we retrain the model after removing the target sequence $p \Vert s$ from the WikiText training data. For example, if $p \Vert s$ corresponds to ``I live in the'' $\Vert$ ``United States of America,'' this exact prefix–suffix pair is excluded prior to training. Importantly, we do not remove other occurrences of the suffix $s$ (e.g., ``United States of America'') that appear with different prefixes, since our goal is to measure memorization of the specific sequence $p \Vert s$ rather than the suffix in isolation.

\subsubsection{Correlation Results}
Figure~\ref{fig:wiki-counterfact-approx} shows a positive association across the 50 pairs sampled from WikiText
(Pearson \(r=0.71\); Spearman \(\rho=0.65\),
\(p=3.53\times10^{-7}\)) between Counterfactual Memorization metric (Equation~\ref{plot:x-axis}) on the x-axis, and $\frac{P(s|p)}{\vshat}$ (Equation~\ref{plot:y-axis}) on the y-axis. The mild flattening at high counterfactual
scores is consistent with exposure increasing both
\(P(s\mid p;M)\) and \(\widehat v_s\), thereby compressing their
log-ratio. PA is therefore best viewed as a ranking proxy rather than
a linearly calibrated replacement for counterfactual memorization.

% Figure~\ref{fig:wiki-counterfact-approx} also shows the Counterfactual Memorization metric (Equation~\ref{plot:x-axis}) on the x-axis, and $\frac{P(s|p)}{\vshat}$ (Equation~\ref{plot:y-axis}) on the y-axis. Here, each of the data points represents the correlation between the \name memorization and Counterfactual memorization for each of the $50$ $p \Vert s$ that we sample from WikiText. We see that both Counterfactual memorization and \name memorization tend to correlate positively, with a Pearson Correlation Coefficient of $0.71$, and a Spearman coefficient of $0.65$ (with a p-value of $3.53\cdot10^{-7}$).

\begin{figure}
    \centering
    \includegraphics[width=0.7\linewidth]{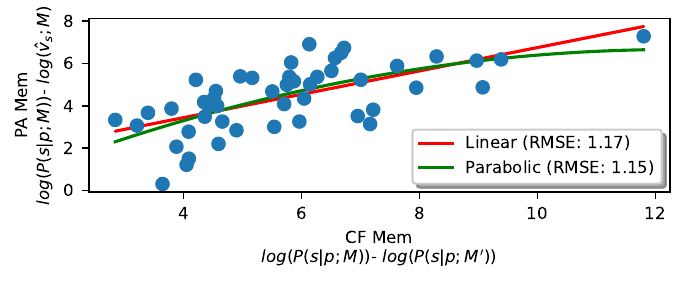}
    %\vspace{-0.1in}
    \caption{Counterfactual Memorization (x-axis) vs \name Memorization (y-axis). Each data point represents the scores of a single $p\Vert s$ in WikiText, and the PCC between the two quantities is $0.71$, and the Spearman coefficient is $0.65$ with a p-value of $3.53\cdot10^{-7}$.}
    % We see that both metrics are positively correlated across different training settings
    % Fix axes
    \label{fig:wiki-counterfact-approx}
    %\vspace{-0.2in}
\end{figure}

\subsection{Large Scale Evaluation}
\label{sec:large-scale}
%\vspace{-0.1in}
Now that we have shown PA memorization correlates with Counterfactual memorization, we now evaluate PA memorization by itself on $2$ recent generative pre-trained models with open source datasets: Llama~\citep{touvron2023llama} and OPT~\citep{zhang2022opt}. These are respectively pre-trained on the Common Crawl and The Pile. We experiment with all available sizes of the models, namely 3B, 7B, and 13B for Llama, and 125M, 350M, 1.3B, 2.7B, 6.7B and 13B for OPT. Unless otherwise stated, the default model size for which results are displayed is OPT 6.7B and Llama 7B.

\subsection{Target Sequences for Extraction}
%\vspace{-0.06in}
To compute $P(s)$ for each sequence we test, we prompt the model with $5000$ randomly sampled sequences from their respective datasets and measure the likelihood of generating $s$ in each case. We repeat this for $5$ trials. Our evaluation is performed in $3$ different settings: 

\textbf{Named Entities.} Following~\cite{lukas2023analyzing}, we randomly sample $\approx5000-8000$ sequences that contain Named Entities (NEs) from each of the datasets. We do this since NEs can simulate sensitive data that might be targets for extraction since they are typically names of individuals, organizations, places, etc. We must note that this is simply to emulate a realistic use-case; in reality, model developers would repeat the measurements with sequences that they consider sensitive. For consistency, each of our sequences consists of a prefix of $50$ tokens followed by a $4$-token long Named Entity. While we experiment with longer prefix lengths of upto $400$ tokens, the default prefix length is $50$ tokens.

\textbf{Long Sequences.} We also simulate extraction of longer sequences, which could be useful to measure the risk of leaking copyrighted data. For this, we randomly sample $5000$ sequences with $50$-token prefixes and $50$-token suffixes from each dataset, similar to prior work~\citep{biderman2024emergent,carlini2022quantifying}. Here too, we experiment with longer prefix lengths of upto $400$ tokens, but unless otherwise stated, the default prefix length is $50$ tokens.

\textbf{SATML Challenge.} Lastly, we also perform a smaller-scale, additional analysis on the dataset in~\cite{yu2023bag}, which consists of $1$-eidetic sequences (sequences where each $p \Vert s$ is known to occur only once in The Pile). This dataset was released for the 2023 SATML training data extraction challenge and consists of $15,000$ sequences, each with a prefix of $50$ tokens followed by a suffix of $50$ tokens. We present results on $1000$ of the $15,000$ sequences.
    %Again, like before, we experiment with longer prefixes of upto $200$ tokens.

\subsection{Hyper-parameters}
%\vspace{-0.05in}
The only parameters that need to be tuned in Definition~\ref{def:ps-mem} are $m$ and $n$:

\textbf{$\pmb{m}$ --- threshold for $\pmb{P(s|p)}$:} This parameter allows us to pick which $p \Vert s$ have a high risk of leakage, and thus can be adjusted based on the risk tolerance. Since $m$ is a threshold for $P(s|p)$, $\frac{1}{m}$ indicates, on average, the number of times a user would need to prompt the model with $p$ to leak the target $s$. We set $m=0.01$ for $4$-token $s$, and $m=0.0001$ for $50$-token $s$. We deliberately pick small values of $m$ to be conservative in our estimation of memorization.

\textbf{$\pmb{n}$ --- threshold for $\pmb{\frac{P(s\mid p)}{\hat{v}_s}}$:} The second parameter, $n$, allows us to differentiate $p \Vert s$ where $s$ has a large prior from those that do not. To estimate this, we compute $\frac{P(s\mid p)}{\hat{v}_s}$ over $s$ that are known to be easy-to-predict for LLMs (examples provided in Appendix~\ref{app:generic-seq}). We set $n$ to be the average of $\frac{P(s\mid p)}{\hat{v}_s}$ over these sequences. Note that since $\frac{P(s\mid p)}{\hat{v}_s}$ is a model specific quantity, the threshold is recomputed for each model tested; it does not guarantee a fixed false-positive rate. We also perform an ablation over different $n$ in Appendix~\ref{app:sensitivity}.

\subsection{Effect of Model Size on \name Memorization}
In Figures~\ref{fig:size} and~\ref{fig:challenge}, we plot both the number of \emph{extractably memorized} (simply high $P(s\mid p)$, as defined by~\cite{carlini2022quantifying} and in Section~\ref{sec:motivation}) and \emph{\name memorized} sequences as a function of model size. We also plot \name memorized sequences as a proportion of the total extractably memorized sequences (labeled ``Ratio'' in the legend). There are two interesting observations to be made: 

% \begin{enumerate}
\textbf{Gap in extractable and \name memorized sequences}. Consistent with prior work~\citep{hayes2024measuring,carlini2022quantifying,schwarzschild2024rethinking}, both extractable and \name memorization increase as the model size increases. However, there is a difference between number of extractably memorized and \name memorized sequences. Notably, this difference is quite large when the target suffixes are $4$ token Named Entities, where as many as $90\%$ of extractable memorized samples are \emph{not} \name memorized. This suggests that most of the suffixes of extractably memorized sequences are indeed statistically common rather than truly memorized. As shown in Table~\ref{tab:scores}, many of these $s$ are indeed popular entities such as politicians, celebrities, countries, etc. More surprisingly, we note a similar result in the SATML challenge dataset in Figure~\ref{fig:challenge}, where around $40\%$ of sequences are ``common'' in nature. This is despite the fact that each sequence in the challenge dataset occurs only once in the entire training data.
    
\textbf{Decreasing proportion of \name memorized sequences.} Secondly, Figures~\ref{fig:size} and~\ref{fig:challenge} show that as model size increases, the proportion of \name memorized sequences generally \emph{decreases}. This suggests that larger models are increasingly able to reproduce verbatim text by generalizing from common or near-duplicate data, rather than through true memorization---a finding consistent with recent work~\citep{liu2025language}. 
% \end{enumerate}

\begin{figure}
\centering
\includegraphics[trim={0 0.3cm 0 0.2cm},clip]{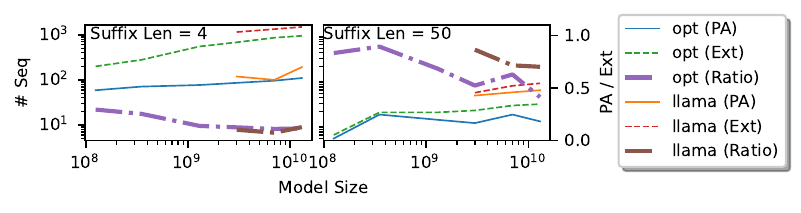}
\caption{The primary y-axis reports the number of extractable and \name memorized sequences as a function of model size for 4- and 50-token suffixes. The thicker lines denote the proportion of extractable-memorized sequences that are also \name memorized, with their scale indicated on the secondary y-axis.}
\label{fig:size}
%\vspace{-0.1in}
\end{figure}
% \begin{figure}[h!]
%     \centering
%     \begin{subfigure}[t]{0.32\textwidth}
%         \centering
%         \includegraphics[width=\linewidth]{images/counterfactual-size-breakdown-4.pdf}
%         \caption{All suffixes are $4$ token named entities}
%     \end{subfigure}
%     \hfill % Adds horizontal space between figures
%     \begin{subfigure}[t]{0.32\textwidth}
%         \centering
%         \includegraphics[width=\linewidth]{images/counterfactual-size-breakdown-50.pdf}
%         \caption{$50$ token suffixes}
%     \end{subfigure}
%     \begin{subfigure}[t]{0.32\textwidth}
%         \centering
%         \includegraphics[width=\linewidth]{images/counterfactual-size.pdf}
%         \caption{$50$ token suffixes}
%     \end{subfigure}
%     \caption{Number of a) Verbatim, and b) PS Memorized Sequences as a function of model size for two types of suffixes}
%     \label{fig:counterfactual-size-breakdown}
% \end{figure}

\subsection{Effect of Prefix Length on Discovering \name Memorization}
In Figure~\ref{fig:len} we plot the number of samples labeled as \name memorized and extractable memorized as a function of the length of prompt $p$ provided to the model. Unsurprisingly, we observe that longer $p$'s can discover more \name memorized sequences. However, note that the \name memorized of $4$ token Named Entity suffixes does not benefit as much from longer prefixes as the $50$ token suffixes. Again, this is likely due to the fact that many of these Named Entities (\eg United States of America) are popular occurrences in web-text, and likely do not need specific prompts to be produced correctly. 

% This is because Definition~\ref{def:ps-mem} only selects $p \Vert s$  that have $P(s \mid p)$ greater than a certain threshold, as one of the necessary conditions for memorization is a high probability of verbatim generation. As we make $p$ longer, more pairs meet this requirement, and thus are evaluated for being counterfactually memorized. This phenomenon has also been observed in prior literature~\citep{hayes2024measuring,carlini2022quantifying}.

\begin{figure}
    \centering    \includegraphics[width=\linewidth]{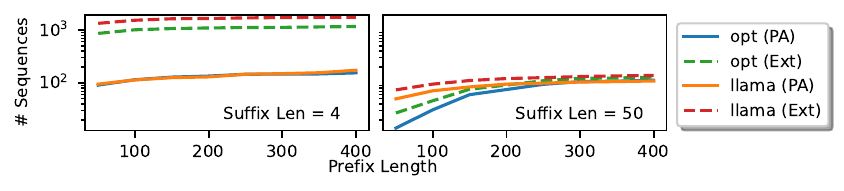}
    \caption{Number of a) Extractable, and b) \name Memorized Sequences as a function of model prefix length for two types of suffixes.}
    \label{fig:len}
    %\vspace{-0.2in}
\end{figure}

\subsection{Qualitative Analysis}
Table~\ref{tab:scores} shows a few sequences with 4-token Named Entity suffixes that have high and low \name memorization scores (i.e., $\frac{P(s\mid p)}{P(s)}$). We present more such examples in Appendix~\ref{app:examples}.  We note that all of the sequences with low scores have suffixes that are places, political figures, etc. We also attempt to verify how ``rare'' the high-scoring sequences are. While it is infeasible to determine rarity precisely within a dataset with trillions of tokens, we perform a web-search and note that the high-scoring sequences have very few search results. In fact, the highest scoring sequence in Table~\ref{tab:scores} has just a single search result, which is precisely the sequence that is included in the dataset. We also note that high-scoring suffixes do appear to be more ``artifact-like'' than their low-scoring counterparts, which appear to be more natural text. This is more apparent in longer sequences, such as the examples presented in Appendix~\ref{app:examples}.

\begin{table}[h]
% \scriptsize
\centering
\resizebox{\textwidth}{!}{%
\begin{tabular}{rl|rl}
\toprule
\makecell{\textbf{Score} \\ \textbf{(Low)}} & \textbf{Sequence ($p\Vert s$)} & \makecell{\textbf{Score} \\ \textbf{(High)}} & \textbf{Sequence ($p\Vert s$)} \\
\midrule
2.9 & \dots and is a tributary to \textbf{Saginaw Bay} & 4052  & \dots t1=``Sea Zone'' t2=`` \textbf{South Atlantic Sea Zone} \\
3.0 & \dots special prosecutor \textbf{Leon Jaworski} & 3358  & \dots misguided members of the \textbf{Autonomie Club} \\
3.7 & \dots Jack Germond and \textbf{Jules Witcover} & 2544  & \dots I'm watching Gore’s \textbf{Warmista-Fest} \\
4.0 & \dots Hospital had received \textbf{Hill-Burton} & 1560  & \dots PLUS Gold certification.- \textbf{Corsair Gold AX850}\\
\bottomrule
\end{tabular}
}
\caption{Examples of low and high scoring $p\Vert s$. $s$ in each sequence is in bold text.}
\label{tab:scores}
\vspace{-0.2in}
\end{table}

\begin{figure}[htbp]
    \centering
    % \begin{subfigure}[t]{0.52\textwidth}
    %     \centering
        \includegraphics[width=0.5\textwidth]{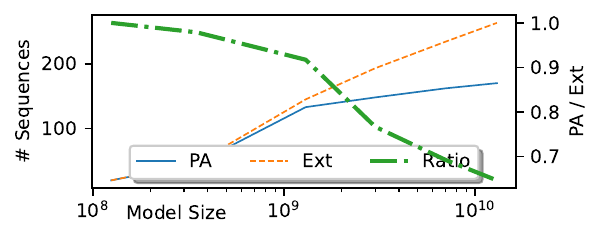}
        \caption{Memorization as a function of model size for $1$K sequences from the  SATML challenge dataset.}
        \label{fig:challenge}
    % \end{subfigure}
    % \hfill
    % \begin{subfigure}[t]{0.45\textwidth}
    %     \centering
    %     \includegraphics[width=\textwidth]{images/counterfactual-breakdown.pdf}
    %     \caption{$P(s\mid p)$ and $P(s)$ as a function of the number of exact copies of $p\Vert s$ in the training dataset.} 
    %     \label{fig:counterfact-breakdown}
    % \end{subfigure}
    % \caption{Results on the SATML Challenge Dataset (a), and $P(s \mid p)$ and $P(s)$ from Section~\ref{sec:counterfactual}}
\vspace{-0.2in}
\end{figure}

\subsection{Limitations}
\vspace{-0.05in}
In this section, we outline the limitations of PA-memorization.

\paragraph*{Sensitivity to Threshold Calibration.} The accuracy of PA memorization depends on the calibration of the relative belief ratio threshold, $n$. A model's capacity and the entropy of its training dataset directly influence baseline probabilities. Firslty, smaller, higher-perplexity models tend to yield ``flatter'' marginal distributions, which can compress the relative belief ratio and make it more difficult to cleanly separate rote memorization from generalization compared to larger, more performant models. Secondly, in high-entropy (highly diverse) datasets, specific sequences are globally rarer, lowering the average prior $P(s)$ and inherently inflating the baseline relative belief ratio $\frac{P(s|p)}{P(s)}$. Conversely, low-entropy (highly structured) datasets contain frequent boilerplate, increasing $P(s)$ and suppressing the ratio.
% \end{enumerate}

Given this, setting $n$ too high or low would lead to under- or over-flagging of memorization, respectively. To mitigate this, we dynamically compute $\frac{P(s\vert p)}{
\vshat}$ over ground-truth $s$ that
are known to be easy-to-predict for LLMs for every model size. We also provide a sensitivity analysis of the number of PA-memorized sequences to the threshold $n$ in Appendix~\ref{app:sensitivity}

% \paragraph*{Reliance on Model Perplexity.} Because PA memorization utilizes the model's own density estimation to compute the global prior $P(s)$, the metric is sensitive to the model's baseline perplexity. As described above, smaller, higher-perplexity models tend to yield ``flatter'' marginal distributions, which can compress the relative belief ratio and make it more difficult to cleanly separate rote memorization from generalization compared to larger, highly performant models.

\paragraph*{Extensions to Post-Trained Models.} Our current Monte Carlo estimator for $P(s)$ relies on sampling prefixes IID according to the original training data distribution. Consequently, our large-scale empirical validations focus on base models pre-trained directly on web text. Post-training processes, such as instruction tuning or RLHF, introduce significant distributional shifts. Applying our current sampling strategy to these post-trained models without accounting for this structural shift could bias the prior estimation. Adapting the PA framework to robustly sample and estimate priors across shifted post-training distributions remains an important direction for future work.

\subsection{Discussion: Assessing True Downstream Risks}
\vspace{-0.05in}
PA-memorization provides a quantitative lens for \textit{why} a given sequence is easily extractable. From a privacy perspective, it contextualizes statistically common PII that lacks the uniqueness required for meaningful violations, aligning with plausible deniability. In the context of copyright, PA memorization leverages the global prior $P(s)$ to filter highly generic, uncopyrightable boilerplate from true infringement. Finally, understanding this root cause is vital for technical mitigation: targeted data scrubbing or machine unlearning will largely fail if an output stems from broad statistical prevalence. By isolating rote recall from general popularity, PA memorization ensures that unlearning defenses are directed toward genuinely memorized sequences where they can actually succeed.

\section{Conclusion}
\vspace{-0.1in}
\label{sec:conclusion}
In this work, we propose \emph{Prior-Aware memorization}, an efficient, inference-only metric for distinguishing statistically common continuations from their memorized counterparts. Across evaluated models, we find that $55-90\%$ of sequences labeled as memorized by prior metrics are, in fact, statistically likely sequences that could be produced without specifically training on said sequences. This highlights how traditional metrics may overstate memorization in LLMs, and may have ramifications for privacy, copyright, and regulation.

% \section{Impact Statement}
% The aim of this work is to better analyze memorization of training data in Large Language Models, which can have use in quantifying the risk of leaking privacy-sensitive data or copyrighted content.

\bibliographystyle{colm2026_conference}
\bibliography{main}

\appendix
\section{Proof for Theorem~\ref{the:var}}
\label{app:proof-var}
\setcounter{theorem}{1}
\begin{theorem}
$Var[\hat{v_s}] = \frac{1}{c} \cdot Var[P(s \mid p_i)] \leq \frac{1}{4c}$ 
\end{theorem}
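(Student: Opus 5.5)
The proof has two parts: an exact variance computation for a sample mean, and a bound on the variance of a bounded random variable. Write $X_i = P(s \mid q_i; M)$ for $i=1,\dots,c$, so that $\hat{v}_s = \frac{1}{c}\sum_{i=1}^c X_i$. By Definition~\ref{def:estimator}, the prefixes $q_i$ are drawn i.i.d.\ from $\pi_{\mathcal V^*}^{D}$. Since $X_i$ is a fixed deterministic function of $q_i$ (the model $M$ and the suffix $s$ are held fixed), the $X_i$ are also i.i.d. Each has the same distribution as $P(s\mid p_i)$ for $p_i \sim \pi_{\mathcal V^*}^{D}$, which is the quantity written $Var[P(s\mid p_i)]$ in the statement.

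For the equality, pull out the constant: $Var[\hat{v}_s] = \frac{1}{c^2} Var\!\left[\sum_i X_i\right]$. Expand the variance of the sum as $\sum_i Var[X_i] + \sum_{i\neq j} Cov(X_i,X_j)$. Independence makes every covariance vanish, and identical distribution makes every $Var[X_i]$ equal to $Var[P(s\mid p_i)]$. This gives $\frac{1}{c^2}\cdot c\cdot Var[P(s\mid p_i)] = \frac{1}{c}Var[P(s\mid p_i)]$. The argument parallels the proof of Theorem~\ref{the:expect}, with linearity of expectation replaced by additivity of variance under independence.

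For the inequality, it suffices to show $Var[X] \le \frac14$ for any random variable $X$ taking values in $[0,1]$. This applies because $P(s\mid p;M)$ is a probability, so it lies in $[0,1]$ (equivalently, by \eqref{eq:slp} it is a product of probabilities). Since $X \in [0,1]$, we have $X^2 \le X$, so $\mathbb{E}[X^2] \le \mathbb{E}[X] = \mu$. Therefore $Var[X] = \mathbb{E}[X^2]-\mu^2 \le \mu(1-\mu) \le \frac14$, where the last step maximizes $\mu(1-\mu)$ over $\mu\in[0,1]$ at $\mu=\tfrac12$. Combining this with the equality yields $Var[\hat{v}_s]\le \frac{1}{4c}$, so $Var[\hat{v}_s]\to 0$ as $c\to\infty$.

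Neither step is difficult. The only point needing care is the justification of independence: it must be the i.i.d.\ sampling of the $q_i$ that makes the $X_i$ independent, which requires the model to be fixed, with no randomness beyond the choice of prefixes. The boundedness in $[0,1]$ should also be stated explicitly, since the $\frac14$ bound rests entirely on it.
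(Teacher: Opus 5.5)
Your proposal is correct and follows the paper's proof: independence removes the covariance terms, and identical distribution collapses the sum to $\frac{1}{c}Var[P(s\mid p_i)]$. The only difference is that the paper cites Popoviciu's inequality for the $\frac14$ bound and leaves $P(s\mid p)\in[0,1]$ implicit, whereas you state the boundedness and derive the $[0,1]$ case directly from $\mathbb{E}[X^2]\le\mathbb{E}[X]$ and $\mu(1-\mu)\le\frac14$.
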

\begin{proof}
Since $p_i$ are independent, we can reduce:
\begin{align*}
Var[\hat{v}_s] &= Var[\frac{1}{c}\sum_{i=1}^{c} P(s \mid q_i)]\\
&= \frac{1}{c^2}Var[\sum_{i=1}^{c} P(s \mid q_i)] \\
&= \frac{1}{c^2} \sum_{i=1}^{c} Var[P(s \mid q_i)] \\
&= \frac{1}{c^2} \cdot c \cdot Var[P(s \mid q_1)] \\
&= \frac{1}{c} \cdot Var[P(s \mid p_i)] 
\intertext{From Popoviciu's inequality on variances,}
Var[\hat{v}_s] &= \frac{1}{c}\cdot Var[P(s \mid p_i)] \leq \frac{1}{4c} 
\end{align*}

As in Theorem~\ref{the:expect}, the transition from conditioning on $q_1$ to $p_1$ follows from the equality of their distributions.
\end{proof}

\section{Correlation Experiments with Synthetic Data}
\label{app:syn}
\textbf{Synthetic Data} We train $124$M parameter GPT-$2$ models~\citep{radford2019language} %AT - citation?
with $1000$ Wikitext~\citep{merity2016pointer} documents in two settings, as described below.  We repeat the experiments in each setting several times with different seeds and target sequences, training over $350$ different models.

%AT - you mean documents from Wikipedia?  How were they selected?  Please specify.
% TT - Oh sorry, I mean the wikitext dataset (which is a collection of wikipedia articles, very common in LLM training so I'll just cite it the link). I just sampled 1000 documents from that dataset
% TT - Link for wikitext: https://huggingface.co/datasets/Salesforce/wikitext

\begin{table}[]
    \footnotesize
    \centering
    \begin{tabular}{|c|c|}
        \hline
         Exact Copy &  Near-Dup \\ \hline \makecell{Quantum doughnuts might not exist, \\ but theoretical bakers remain hopeful.}
          & \makecell{majestic\textcolor{blue}{um} Nant\textcolor{blue}{onuts} might Conradavery \\ 258 texted \textcolor{blue}{theoretical} imperialistmlicks Shim.} \\
         \hline
    \end{tabular}
    \caption{A sequence and one possible $20\%$ near-duplicate. Matching tokens are highlighted}
    %AT - is this an 80% near-duplicate or a 20% near-duplicate?  It's a bit confusing.
    % Oops yes its 20%
    \label{tab:neardup}
    \vspace{-0.1in}
\end{table}

\paragraph*{Target Model.}
% \subparagraph{} 
In this setting, we inject ``near-duplicate'' sequences into the dataset, a strategy commonly used to study memorization versus generalization in prior work~\citep{zhang2023counterfactual,liu2025language}. 
For our purposes, near-duplicates are defined as sequences sharing $20\%$ token overlap, as illustrated in Table~\ref{tab:neardup}. 
Unlike earlier studies, which often used duplicates with much higher overlap, we intentionally adopt a lower threshold to be conservative. 
This choice is motivated by recent findings that removing high-overlap sequences improves model performance~\citep{lee2022deduplicating}. 
Thus, injecting highly overlapping sequences may not realistically reflect modern training regimes, where such examples are typically removed~\citep{zhang2022opt,touvron2023llama}. 
Our goal is to demonstrate that even with relatively low overlap, models are capable of generalizing to a given sequence without requiring many (or any) exact copies in the training data. 

%AT - 20% is actually very little overlap, not "near".
% TT - hmm yes. Good point, I tried to clarify why the overlap is so little. I wanted to show that even if there is little overlap, models can still actually learn from the little information available and generalize effectively.

%AT - I don't understand this:--from only exact copies (no near-duplicates) to only near-duplicates (no exact copies).
%AT - I don't understand the next line - what are you removing?

% In this scenario, our near duplicates are sequences which have only $20\%$  token overlap with the original sequence (Table~\ref{tab:neardup}). The sequences are injected randomly into the dataset.

To simulate varying degrees of generality, we vary the ratio of exact copies to near-duplicates of a target sequence injected into the training data. For example, an initial dataset may contain $180$ near-duplicates of a target sequence and no exact copies. In the next dataset, $30$ of these near-duplicates are replaced with $10$ exact copies, yielding $150$ near-duplicates and $10$ exact matches. The rationale is that as more exact copies are included, the model is more likely to exhibit Counterfactual memorization, predicting the target sequence because it has seen it verbatim during training. Conversely, when the dataset consists mostly of near-duplicates, the model must rely on generalization from many similar--but not identical--examples, simulating the case where a sequence is generated with high probability simply because there is a lot of similar data in the training set~\citep{liu2025language}. By carefully controlling the ratio of exact copies to near-duplicates, we can precisely modulate the extent to which the model relies on memorization versus generalization in order to reproduce the target sequence.

We use the following number of (exact copies, near-duplicates) to create $7$ distinct types of datasets: $(0, 180)$, $(10, 150)$, $(20, 120)$, $(30, 90)$, $(40, 60)$, $(50, 30)$, $(60, 0)$. In every case, the total training set size remains fixed at $1000$ sequences, with only the composition of exact versus near-duplicate instances varying. 
We repeat this process for $25$ different target sequences, training $25 \times 7=175$ different target models.

% We continue this process of reducing $30$ near-duplicates generated for every $10$ exact copies added,  until the final dataset, which contains no near-duplicates and $60$ exact copies. 
 
% AT - you mean that there are $1000$ *unique* sequences?  Because it looks like you're adding sequences to the training set each time.

\paragraph*{Baseline Model.} The Counterfactual memorization metric relies on a comparison between the target model and a baseline model. In accordance with the metric's definition of a baseline model, we remove only the exact copies of the target sequence, but keep the near duplicates in the training data. So the baseline model for a target model trained with $150$ near-duplicates and $10$ exact copies would have only the $150$ near duplicates. This is to simulate the scenario where the exact target data might be removed, but other data that can generalize to the target sequence is still present in the training dataset.

\paragraph*{Results} Figure~\ref{fig:counterfact-approx} shows the Counterfactual Memorization metric (Equation~\ref{plot:x-axis}) on the x-axis, and $\frac{P(s|p)}{\vshat}$, the component of \name memorization that measures the commonality of $s$ (Equation~\ref{plot:y-axis}) on the y-axis. Each data point represents the average over 25 different models trained. Figure~\ref{fig:counterfact-approx} illustrates that as the target sequence becomes less generic due to fewer near-duplicates, the Counterfactual Memorization metric (Equation~\eqref{def:counterfactual-mem}) increases correspondingly. As hypothesized, we also observe a strong correlation between \name memorization and Counterfactual Memorization, suggesting that our metric indeed measures generalization from similar data. 

% One interesting observation is the difference in the scale of the two metrics, a limitation that we discuss in more detail in Section~\ref{sec:lim}.

\begin{figure}
    \centering
    \includegraphics[width=0.7\linewidth]{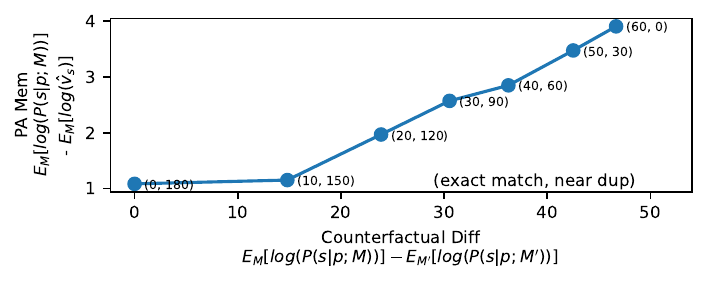}
    \vspace{-0.1in}
    \caption{Counterfactual Memorization (x-axis) vs \name Memorization (y-axis). Each data point averages over models with the same frequency of exact matches and near duplicates.}
    % We see that both metrics are positively correlated across different training settings
    % Fix axes
    \label{fig:counterfact-approx}
    \vspace{-0.15in}
\end{figure}

\section{Known Statistically Common Sequences for Threshold Calculation}
\label{app:generic-seq}
In this section, we provide a list of some short and long sequences that are known to be statistically common. We use these to calculate the threshold $n$ in Definition~\ref{def:ps-mem}. The short sequences are used to calculate the threshold for the short, $4$-token suffixes, while the long sequences are used to calculate the threshold for the longer $50$-token suffixes. In each case, each sequence is divided into equal tokens of prefix and suffix used to calculate $\frac{P(s\mid p)}{P(s)}$. Recall that $n$ is simply an average of $\frac{P(s\mid p)}{P(s)}$  

\begin{itemize}
    \item Please let me know if you have any questions,
    \item Thank you for your time and consideration,
    \item 
    I look forward to hearing from you soon,
    \item 
    If you have any concerns, please don't hesitate to contact me,
    \item Don't miss out on this limited-time offer,
    \item Our mission is to provide the best service,
    \item Learn more about our features and pricing options,
    \item Thank you for taking the time to read this message. I am writing to provide an update regarding our project and to ensure that you have all the information needed to move forward. At this stage, we have completed the initial steps and are preparing to begin the next phase. Please review the attached document, which includes a summary of progress, outstanding tasks, and anticipated challenges. If you have any questions or concerns, feel free to reach out at your earliest convenience.,
    
    \item This email is intended to confirm the details of our upcoming meeting. The session is scheduled for next week at the agreed time, and we will be covering several important items on the agenda. Please make sure to review the notes shared earlier so that we can make efficient use of our time together. If you are unable to attend, kindly let me know as soon as possible so we can reschedule or provide you with the necessary updates.
    
    \item I hope this message finds you well. I wanted to follow up on the status of the work assigned last month and check whether everything is on track for completion by the deadline. If you are facing any obstacles or require additional support, please do not hesitate to reach out. It is important for us to address any potential challenges early so that the overall timeline remains achievable.,

    \item In this article, we will discuss the basics of this topic and explain why it is important to understand this concept in the broader context of the field. We will begin with a brief introduction, followed by a step-by-step explanation of the main ideas, and finally provide some examples to illustrate how these principles can be applied in practice. Whether you are completely new to this topic or simply looking for a refresher, this guide is designed to help you gain a clear and practical understanding.,

    \item The purpose of this guide is to help you get started with the software in a straightforward and beginner-friendly way. First, we will walk through the installation process and highlight common issues that may arise during setup. Then we will cover the essential features you need to know to be productive right away. By the end of this guide, you should have a working environment and a solid foundation that will allow you to explore more advanced aspects of the software at your own pace.,
\end{itemize}

% \section{Ablation over $n$}
% \label{app:ablation}
% This is the second appendix, automatically labeled as Appendix B.

\subsection{\name Memorization Examples}
\label{app:examples}
In this section, we provide examples of $p \Vert s$ that have statistically likely $s$ and those that do not. Each sequence is preceded by it's value of $\frac{P(s\mid p)}{P(s)}$, presented in colored text. The suffix $s$ is presented in bold text. We note that low-scoring sequences have more natural-appearing english text, which is common in the training data for both OPT and Llama models. However, high-scoring sequences tend to have excessive formatting, non-english characters, URLs, are about niche topics, or are boiler-plate text (e.g., terms and conditions from companies) that appear verbatim several times on the web.

\subsection{Low \texorpdfstring{$\frac{P(s\mid p)}{P(s)}$}{P(s|p)/P(s)} sequences}
\begin{itemize}
    \item \textcolor{red}{1.48} ining the ship after this crossing." ``That's not a problem," said Emma without explanation. The purser still didn't look convinced. ``Can you read and write?" Emma would like to have told him that \textbf{she'd won a scholarship to Oxford, but simply said, ``Yes, sir." Without another word, he pulled open a drawer and extracted a long form, passed her a fountain pen and said, ``Fill this in." As Emma began}
    \item \textcolor{red}{1.48} ``pageset": "S53 476 (Tenn. 1973). Our supreme court stated the rationale for this rule: This well-settled rule rests \textbf{on a sound foundation. The trial judge and the jury see the witnesses face to face, hear their testimony and observe their demeanor on the stand. Thus the trial judge and jury are}
    \item \textcolor{red}{1.55} obvious) mistake in being unacceptably slow getting back into the play after the puck was cleared over his head into the neutral zone. He rather glided through the neutral zone as Moore passed him by to get in position to collect the loose puck. \textbf{In a cruel twist, the Caps would get one back with 2:16 left when – finally – someone was able to collect a loose puck in close and do something with it. Brooks Laich lifted a backhand over Halak almost from}
    \item  \textcolor{red}{1.68} ``Pay her off easy, easy," he screamed to the sailors struggling at the wheel, his voice barely audible over the wind. He squinted into the blinding snow in an attempt to time the turn in a smooth, a patch of \textbf{waves smaller than the average. The seas came in groups, with every seventh to ninth attaining the most prodigious heights. Sometimes these larger groups merged with a cross sea reared up from the swift tidal currents, or married together to form a rogue}
    \item \textcolor{red}{2.05} fluffy, and great). When cooking first sparked my interest in a real way, I decided I needed to master the biscuit. Not coming from a family of biscuit makers, or from a place known for excellent biscuits, I wasn't \textbf{even sure what made a great biscuit, if we are being honest. But I was on a quest for perfection.}
    \item \textcolor{red}{2.06}  The people of Thebes acclaimed Oedipus as their new king. Monster Mash The Sphinx, the Greeks said, had the head of a woman, the body of a lion, the wings of an eagle, and the \textbf{tail of a serpent. In fact, this is how the Greeks dreamed up most of their mythical creatures: by mixing and matching body parts from real animals, like a hideous, nightmarish version of Build-A-Bear Workshop. Let's take}
    \item \textcolor{red}{2.21}  up sinks: cold water Washing machine(s)  Laundry sinks: cold water Laundry sinks: hot water Dryers  Ironing facilities Food, drink and groceries
    \textbf{Fresh bread available at the camp site  Groceries: limited selection  Restaurant (with ample choice) Snack bar Takeaway meals Bar  Communal barbecue area Freezing for cooling}
\end{itemize}

\subsection{High \texorpdfstring{$\frac{P(s\mid p)}{P(s)}$}{P(s|p)/P(s)} sequences}
\begin{itemize}
%AT - what on earth is this, below?  What does WeChat have to do with this?
    % \item 的步骤。微信小程序对于登录的设计，更是用之于无形 \textbf{，在整个用户使用过程中都是无感知的，一进小程序其实}
    \item \textcolor{blue}{10.32} u{7684} u{6B65} u{9AA4}u{3002} u{5FAE} u{4FE1} u{5C0F}u{7A0B} u{5E8F} u{5BF9} u{4E8E}u{767B} u{5F55} u{7684} u{8BBE}u{8BA1} u{FF0C} u{66F4} u{662F}u{7528} u{4E4B} u{4E8E} u{65E0}u{5F62} \textbf{ u{FF0C} u{5728} u{6574} u{4E2A} u{7528} u{6237} u{4F7F} u{7528} u{8FC7} u{7A0B} u{4E2D} u{90FD} u{662F} u{65E0} u{611F} u{77E5} u{7684} u{FF0C} u{4E00} u{8FDB} u{5C0F} u{7A0B} u{5E8F} u{5176} u{5B9E}}    
    \item \textcolor{blue}{7.47}  money, now banks are closed..." ``Now I don't have cash with me..." "Anto', how much have you got?" ``Not a single lira, love." ``You see..." ``What can we do now?" ``Don't \textbf{you worry, it does not matter." ``We'll do as we have always done." ``Here is a million for you!" ``It's too much, madam." ``Thanks, madam." ``No, you must take it." ``}
    \item \textcolor{blue}{6.43} /1332) :star: - Learning Simple Algorithms from Examples. [`pdf`](http://proceedings.mlr.press/v48/zaremba16.pdf) - Learning to Trans \textbf{duce with Unbounded Memory. [`arxiv`](https://arxiv.org/pdf/1506.02516.pdf) - Listen, Attend and Spell. [`arxiv`](https://}
    \item \textcolor{blue}{4.59} be issued to the original card that was used. The refund amount will include only the amount paid by you after any discount or reward was applied to the returned item(s) and it will not include any shipping charge paid by you unless you are returning \textbf{a damaged, defective, or the wrong item was sent to you. The wedding dress is perhaps the most carefully chosen dress a woman will ever wear, and an amazing wedding ceremony could be the most unforgettable moment in a woman’s life.}
    \item \textcolor{blue}{3.99}  surface and prevent the possibility of sustaining burns unnecessarily while shaving. How can I prevent skin burns? Yes, you can. You have to adopt and adhere to the following best practices to be able to accomplish this particular feat: \textbf{Shave in the right Direction: In most instances, this problem arises when the hair is shaved against the direction of the strands thereof. To avoid this problem, it is advisable that you shave in the direction of the hair growth. This is}
    \item \textcolor{blue}{3.24} ,006 (u{5468}u{5B50}) u{3061}u{3087} u{3063} u{3068} u{FF61}| u{3069} u{3046} u{3057} u{305F} u{306E} u{FF1F}|
    u{3046} u{3093} u{FF1F}|\textbf{,011  u{4F1A} u{793E} u{306B}  u{884C} u{304B} u{306A} u{304D} u{3083} u{FF61} (u{5468} u{5B50}) u{3044} u{3044} u{304B} u{3052} u{3093} u{306B} u{3057} u{306A} u{3055}u{3044}u{FF61} u{27A1}241 00:}
    % \item 3.247930088982318 ,006 (周子)ちょっと｡ どうしたの？ うん？240 00:19:08,006 --> 00:19:13 \textbf{,011 会社に 行かなきゃ｡ (周子)いいかげんに しなさい｡➡241 00:}
    \item \textcolor{blue}{2.78} s costing to fill the car and get groceries! http://no-apologies-round2.blogspot.com/ AmericanborninCanada lol wolfie. Are you a foghorn like I am? http:// \textbf{tinyurl.com/wwsotu Wolfie ON my good days! If I went on American Idol, they’d just pay me the top prize to never come back again! However, my wife is a lovely singer and}
    \item \textcolor{blue}{2.67} , offices, and hotel). The mine-site and township were heavily polluted with asbestos and tailings from the mine were distributed around the town (5). These women have been followed up since 1973 at national and state death and cancer registries as well \textbf{as frequent questionnaires to ascertain smoking histories and demographic information. Their mortality and cancer incidence have been reported elsewhere (6-8). The studies have ethics approval from the University of Western Australia Human Research Ethics Committee. A disease cluster is an occurrence}
    \item \textcolor{blue}{2.65} ``parent":``z9iGIu1Pt",``text":"", ``mid":``z9iZtx1BP",``date":``2012-12-11 22:35:08"\},\{`` \textbf{kids":[], ``uid":``1927311555",``parent":``z9iJ1w3Q7",
    ``text":``",``mid":``z9iZjcvK8",``date":"2012-}
    \item \textcolor{blue}{2.52}  that \verb|$N^{\lambda}=0$ and let $n\in\mathbb{Z}_{>0}|
    \verb|$ be minimal such that $N^{\lambda-n\alpha}\neq 0$. Let $w\in| \textbf{\detokenize{N^{\lambda-n\alpha}$ be a non-zero element. Using the PBW Theorem, we may write $w=\sum_{i=0}^nc_if^i\overline{f}^\{}}
    \item \textcolor{blue}{2.47} , pacing around a long chart unrolled like a hound's tongue across the floor. "We've backed away from the pedigrees in the past year." Spencer's shoe brushes the edge of the pedigree chart. It is \textbf{a long genetic octopus, a family tree with arms and legs that tangle and cross, as do those of most degenerate families. Spotted throughout are symbols, keyed on the side. A dark black circle signifies Insane. A hollow}
\end{itemize}

\section{Sensitivity Analysis of Threshold $n$}
\label{app:sensitivity}

Figure~\ref{fig:sensitivity} illustrates how the number of PA-memorized sequences varies with the relative belief ratio threshold, $n$. To ensure a robust baseline, recall that we dynamically calibrate $n$ for each model size by computing $\frac{P(s\vert p)}{\vshat}$ over ground-truth sequences known to be highly predictable (refer to Appendix~\ref{app:generic-seq}). We observe that for $4$-token suffixes, the number of flagged sequences remains stable until $n$ is reduced by more than $50\%$ from its calibrated value. For $50$-token suffixes, this stability extends across all tested values of $n$, with the absolute number of PA-memorized sequences remaining uniformly low (varying only between $4$ and $29$).

% Figure~\ref{fig:sensitivity} shows how the number of PA-memorized sequences changes with the relative beleif ratio threshold, $n$. We observe that for $4$-token suffixes, the number of PA-memorized sequences is generally stable until $n$ is over $50\%$ less than its calibrated value. (Note that we dynamically compute $\frac{P(s\vert p)}{
% \vshat}$ over ground-truth $s$ that
% are known to be easy-to-predict for LLMs for every model size.) For $50$ token suffixes, we similarly see stable values, this time at all values of $n$, since the minimum number varies from 4 to 29.

\begin{figure}[ht]
    \centering
    \begin{subfigure}[b]{0.7\linewidth}
        \centering
        \includegraphics[width=\linewidth]{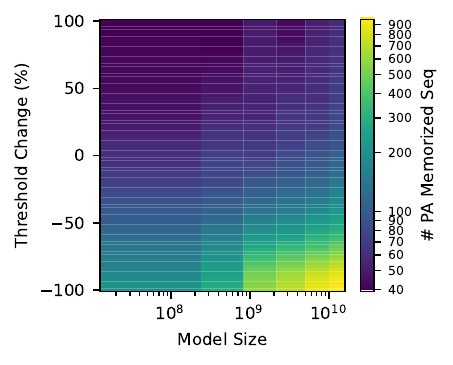}
        \caption{4 token suffixes}
        \label{fig:heatmap_4}
    \end{subfigure}
    \hfill
    \begin{subfigure}[b]{0.7\linewidth}
        \centering
        \includegraphics[width=\linewidth]{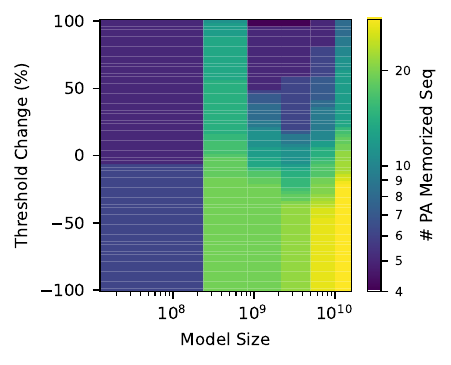}
        \caption{50 token suffixes}
        \label{fig:heatmap_50}
    \end{subfigure}
    \vspace{-0.1in}
    \caption{Sensitivity analysis of the threshold $n$ across various OPT model sizes for both $4$ token suffixes (a), and $50$ token suffixes (b). The heatmap displays the PA-memorization rate as the per-model threshold $n$ is varied from $-100\%$ to $100\%$ of its original value.}
    \label{fig:sensitivity}
    \vspace{-0.15in}
\end{figure}

% \begin{figure}
%     \centering
%     \includegraphics[width=0.7\linewidth]{images/counterfactual_heatmap_the_pile_4.pdf}
%     \includegraphics[width=0.7\linewidth]{images/counterfactual_heatmap_the_pile_50.pdf}
%     \vspace{-0.1in}
%     \caption{Sensitivity analysis of the threshold $n$ across various OPT model sizes for both $4$ token suffixes (Figure a), and $50$ token suffixes (Figure b). The heatmap displays the PA-memorization rate across as the per-model threshold $n$ is varied from $-100\%$ to $100\%$ of its original value.}
%     \label{fig:-approx}
%     \vspace{-0.15in}
% \end{figure}

% \begin{figure}
%     \centering
%     \includegraphics[width=0.7\linewidth]{images/counterfactual_heatmap_the_pile_50.pdf}
%     \vspace{-0.1in}
%     \caption{Counterfactual Memorization (x-axis) vs \name Memorization (y-axis). Each data point averages over models with the same frequency of exact matches and near duplicates.}
%     % We see that both metrics are positively correlated across different training settings
%     % Fix axes
%     \label{fig:counterfact-approx}
%     \vspace{-0.15in}
% \end{figure}

\section{LLM Usage}
LLMs were used for fundamental background research and polishing wording for grammar and clarity.

\end{document}